\tikzset{>=latex}
\tikzset{vertex/.style = {circle, inner sep=0.5pt, fill=muibluegrey, draw=muibluegrey}}
\tikzset{cross/.style = {cross out, draw=muibluegrey, thick, fill=none, minimum size=2 * (#1-\pgflinewidth), inner sep=0pt, outer sep=0pt}, cross/.default={2pt}}
\let\NAT@parse\relax
\newtheoremstyle{mpsdefinition}{2pt plus 2pt minus 0pt}{2pt plus 2pt minus 0pt}
{\itshape}
{0pt}{\bfseries}{.}{ }{\thmname{#1}\thmnumber{ #2}\thmnote{ \normalfont(#3)}}%
\theoremstyle{mpsdefinition}
\theoremstyle{plain}
\newtheorem{theorem}{Theorem}
\newcommand*{\defeq}{\mathrel{\vcenter{\baselineskip0.5ex \lineskiplimit0pt
                     \hbox{\scriptsize.}\hbox{\scriptsize.}}}%
                 =}
\newcommand{\mpsexpand}[1]{\texttt{expand}\left({#1}\right)}
\newcommand{\mpsupdateapproximation}[1]{\texttt{update\_approximation}\left(#1\right)}
\newcommand{\mpsissearchmarkedfinished}[1]{\texttt{is\_search\_marked\_finished}\left(#1\right)}
\newcommand{\mpsmarksearchfinished}[1]{\texttt{mark\_search\_finished}\left(#1\right)}
\newcommand{\mpsmarksearchunfinished}[1]{\texttt{mark\_search\_unfinished}\left(#1\right)}
\newcommand{\mpsprune}[1]{\texttt{prune}\left(#1\right)}
\newcommand{\mpssample}[1]{\texttt{sample}\left(#1\right)}
\newcommand{\mpsupdateinflationfactor}[1]{\texttt{update\_inflation\_factor}\left(#1\right)}
\newcommand{\mpsupdatetruncationfactor}[1]{\texttt{update\_truncation\_factor}\left(#1\right)}
\definecolor{muired}{HTML}{e53935}
\definecolor{muipink}{HTML}{d81b60}
\definecolor{muipurple}{HTML}{8e24aa}
\definecolor{muideeppurple}{HTML}{5e35b1}
\definecolor{muiindigo}{HTML}{3949ab}
\definecolor{muiblue}{HTML}{1e88e5}
\definecolor{muilightblue}{HTML}{039be5}
\definecolor{muicyan}{HTML}{00acc1}
\definecolor{muiteal}{HTML}{00897b}
\definecolor{muigreen}{HTML}{43a047}
\definecolor{muilightgreen}{HTML}{7cb342}
\definecolor{muilime}{HTML}{c0ca33}
\definecolor{muiyellow}{HTML}{fdd835}
\definecolor{muiamber}{HTML}{ffb300}
\definecolor{muiorange}{HTML}{fb8c00}
\definecolor{muideeporange}{HTML}{f4511e}
\definecolor{muibrown}{HTML}{6d4c41}
\definecolor{muigrey}{HTML}{757575}
\definecolor{muibluegrey}{HTML}{546e7a}
\title{\LARGE \bf
Advanced BIT* (ABIT*):\\ Sampling-Based Planning with Advanced Graph-Search Techniques
}
\author{Marlin P.\ Strub$^{1}$ and Jonathan D.\ Gammell$^{1}$% <-this % stops a space
\thanks{$^{1}$M.\ P.\ Strub and J.\ D.\ Gammell are with the Estimation, Search, and Planning (ESP) Group of the Oxford Robotics Institute (ORI), University of Oxford, United Kingdom. {\tt\footnotesize (mstrub|gammell)@robots.ox.ac.uk}}%
}
\begin{document}

\maketitle
\thispagestyle{empty}
\pagestyle{empty}

%%%%%%%%%%%%%%%%%%%%%%%%%%%%%%%%%%%%%%%%%%%%%%%%%%%%%%%%%%%%%%%%%%%%%%%%%%%%%%%%
\begin{abstract}

  Path planning is an active area of research essential for many applications in robotics. Popular techniques include graph-based searches and sampling-based planners. These approaches are powerful but have limitations.

  This paper continues work to combine their strengths and mitigate their limitations using a unified planning paradigm. It does this by viewing the path planning problem as the two subproblems of search and approximation and using advanced graph-search techniques on a sampling-based approximation.

  This perspective leads to Advanced BIT*. ABIT* combines truncated anytime graph-based searches, such as ATD*, with anytime almost-surely asymptotically optimal sampling-based planners, such as RRT*. This allows it to quickly find initial solutions and then converge towards the optimum in an anytime manner. ABIT* outperforms existing single-query, sampling-based planners on the tested problems in \( \mathbb{R}^{4} \) and \( \mathbb{R}^{8} \), and was demonstrated on real-world problems with NASA/JPL-Caltech.\

\end{abstract}

%%%%%%%%%%%%%%%%%%%%%%%%%%%%%%%%%%%%%%%%%%%%%%%%%%%%%%%%%%%%%%%%%%%%%%%%%%%%%%%%

\bstctlcite{IEEEexample:BSTcontrol} % Ensures no dashes for repeated author names.

\section{Introduction}%
\label{sec:introduction}

Popular path planning algorithms in robotics include graph-based searches, such as A$^{*}$~\citep{hart1968} and Dijkstra's~\citep{dijkstra1959}, and sampling-based planners, such as Rapidly-exploring Random Trees (RRT)~\citep{lavalle2001} and Probabilistic Roadmaps (PRM)~\citep{kavraki1996}. Both graph- and sampling-based approaches have characteristic strengths and limitations. Previous work~\citep{gammell2015,gammell2020} has separated search and approximation in single-query, almost-surely asymptotically optimal sampling-based planning to combine their strengths and mitigate their limitations. This separation can be leveraged to use advanced graph-based search techniques on an anytime sampling-based approximation to further improve performance.

An important strength of graph-based approaches is their strong theoretical guarantees. A* is \emph{resolution-optimal} (and \emph{resolution-complete}) as well as \emph{optimally efficient}. Any other algorithm guaranteed to find the optimal solution must expand at least as many vertices as A*, given the same problem information~\citep{hart1968}. This efficiency is achieved by always expanding the state with the highest potential solution quality but requires an ordering of the search that does not provide any solutions until the optimum is found.

\emph{Anytime} graph-search algorithms sacrifice the efficiency of A* to find intermediate solutions faster. Anytime Repairing A* (ARA*)~\citep{likhachev2004} finds an initial, potentially suboptimal solution quickly and then uses any remaining computational time to improve it. ARA* does this without duplicated search effort by starting with a heuristic that is inflated and then gradually decreasing this inflation while accounting for changes to state connections (i.e., inconsistent vertices).

\begin{figure}[t]
  \centering
  \includegraphics[width = \columnwidth]{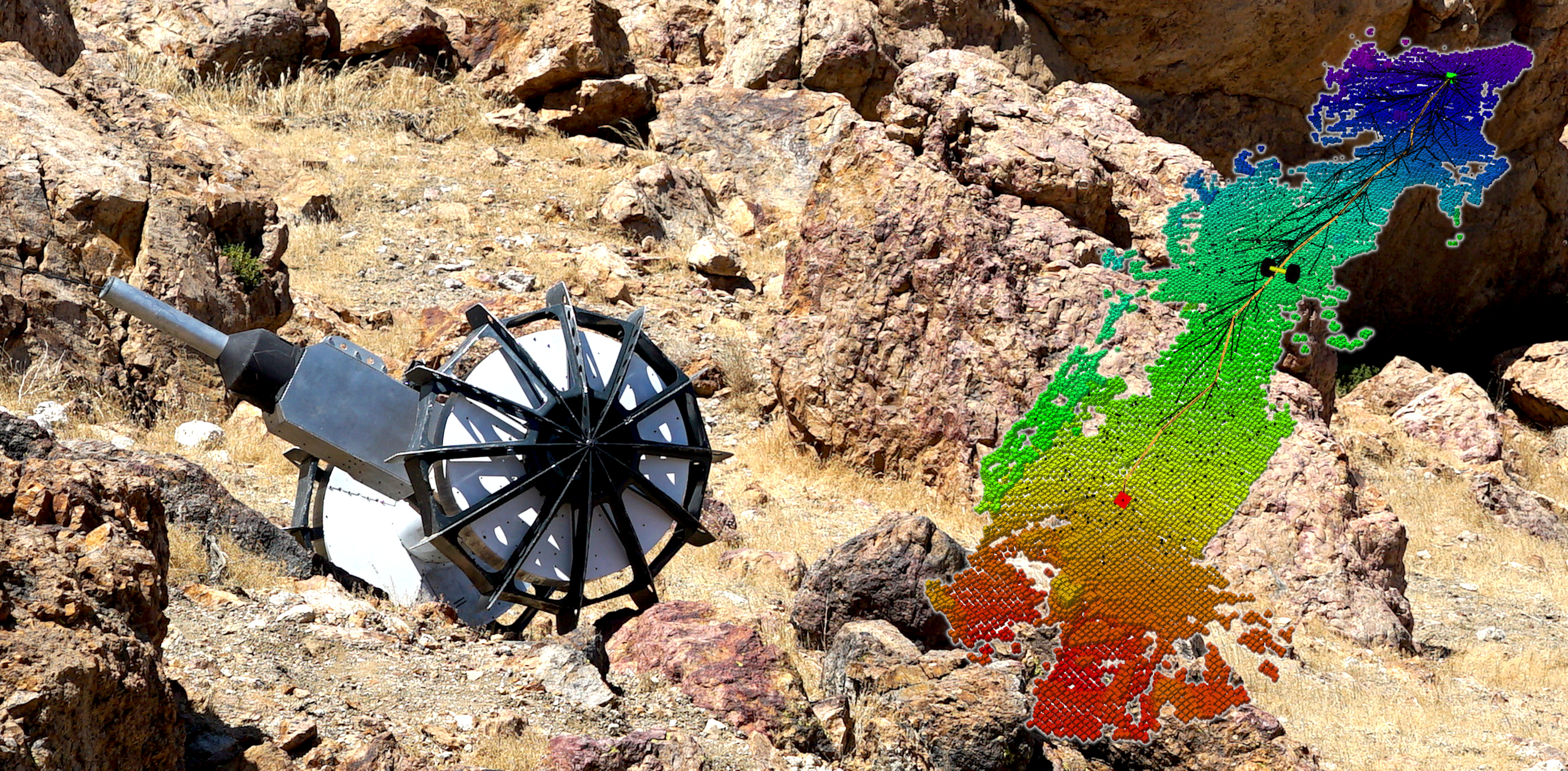}
  \caption{\footnotesize ABIT* on NASA/JPL-Caltech's Axel Rover System~\protect\citep{nesnas2012} during a week-long field test in the Mojave Desert, California, USA (Section~\protect\ref{sec:planning-for-axel}).
  }%
\label{fig:teaser}
\end{figure}

These graph-based algorithms cannot be applied directly to planning problems with continuously valued state spaces and selecting appropriate \textit{a priori} discretizations is difficult. Coarse resolutions are computationally inexpensive to search but may yield paths of poor quality in a continuous context. Fine resolutions contain paths of higher quality~\citep{bertsekas1975} but the associated computational cost becomes prohibitively expensive in high dimensions (i.e., \textit{the curse of dimensionality}~\citep{bellman1957}).

Sampling-based planners sample states incrementally to avoid picking a resolution \textit{a priori}. Single-query planners such as RRT simultaneously build and search an anytime approximation that improves with computational time. This incremental sampling makes the search dependent on the sampling order and results in a randomly ordered search.

Ordering the search of an incremental planner in a solution-oriented manner requires either ordering the approximation or separating the search from the approximation. Batch Informed Trees (BIT*)~\citep{gammell2015,gammell2020} achieves the second by sampling batches of states and viewing them as an increasingly dense implicit random geometric graph (RGG)~\citep{penrose2003}. BIT* uses incremental search techniques, similar to Lifelong Planning A* (LPA*)~\citep{koenig2004}, to process the sampled states in order of potential solution quality.

This paper extends BIT* to further leverage the separation of search and approximation in sampling-based planning. Advanced BIT* (ABIT*) uses advanced graph-search techniques, such as inflation and truncation, to balance exploring and exploiting an increasingly dense RGG approximation. ABIT* is almost-surely asymptotically optimal and outperforms existing single-query, almost-surely asymptotically optimal planners on random and artificially designed problems, especially in high dimensions. The benefits of ABIT* were shown on real-world problems on NASA/JPL-Caltech's Axel~(Fig.~\ref{fig:teaser}) during a week-long field test in the Mojave Desert testing autonomous navigation on challenging terrain.

\section{Background}%
\label{sec:background}

Multiquery algorithms, such as PRM and PRM*~\citep{karaman2011}, separate the approximation and the search in two phases. These algorithms first sample the entire collision-free state space to approximate it with a graph which is then searched to solve individual problems. This approximation is computationally expensive but justified by its reuse.

Lazy-PRM~\citep{bohlin2000} reduces the approximation cost by initially assuming that all vertices and edges are collision-free. A planning problem is then solved by searching this graph and checking the resulting path for collisions. If it contains collisions then the corresponding vertices and edges are removed and a new search is started on the updated graph. 

Single-query problems only require an approximation of the state space relevant to the query. Expansive Space Trees (EST)~\citep{hsu1997} avoids approximating the entire state space by incrementally growing trees from the start and goal. This focuses the approximation to the relevant region of the state space but couples the search with the approximation and results in a randomly ordered search.

RRT grows a single tree from the start to incrementally approximate the state space. It rapidly expands this tree by biasing its growth towards unexplored regions of the space (i.e., with \textit{Voronoi bias}). The approximation and search are still coupled by the incremental sampling which also results in a randomly ordered search.

RRT-Connect~\citep{kuffner2000} grows trees from the start and goal and biases their growth both towards the unexplored space and each other. The greedy tree connection often results in fast initial solution times but the search of the unexplored space remains randomly ordered by the incremental sampling.

These algorithms provide no solution quality guarantee. RRT*~\citep{karaman2011} extends RRT to provide \emph{almost-surely asymptotically optimal} solutions. Its solutions converge to the optimum with probability one given infinite computational time by locally rewiring the tree to ensure locally optimal connections. This search for the optimal local connection is performed immediately after each new state is sampled and the global search of the state space remains random.

Improving the convergence rate of RRT* is an active area of research. Informed RRT*~\citep{gammell2018} and RRT*-Smart~\citep{nasir2013} are based on different sampling strategies. RRT$^{\#}$~\citep{arslan2013} modifies the rewiring procedure to ensure globally optimal connections. Lower Bound Tree-RRT (LBT-RRT)~\citep{salzman2015} is a near-optimal variant that allows for continuous interpolation between RRT and RRT* by maintaining a tree which stores a lower bound on the cost-to-come to all vertices. These extensions modify how states are used but still process them individually and maintain the random search order of RRT*.

Approximation and search can be separated by processing batches of multiple states. Fast Marching Tree (FMT*)~\citep{janson2015} samples a user-specified number of states and uses a Fast Marching Method~\citep{valero2013} to grow a tree that connects them in order of increasing cost-to-come. FMT* is not anytime and must be restarted from scratch if no suitable solution is found with the specified number of samples.

BIT* samples multiple batches of states and views the resulting approximation as an increasingly dense edge-implicit RGG\@. It performs a search ordered on the potential solution quality of the implicit edges of this graph. It efficiently reuses previous search efforts by incorporating techniques from incremental graph-search approaches, similar to LPA*. BIT* is anytime but uses a nonanytime graph-search that only returns a single solution per approximation. This exploits the current approximation without considering that it will be updated once the search is finished, which may be inefficient.

Fast-BIT*~\citep{holston2017} is a variation of BIT* that finds initial solutions faster by ordering the initial search solely on cost-to-go. Once a solution is found, it reorders its queues with a full-solution heuristic and reprocesses all vertices. This maintains almost-sure asymptotic optimality but duplicates previous search effort unnecessarily. Fast-BIT* then performs the same incremental search as BIT* to fully exploit each subsequent approximation.

Unlike the multiquery planners, ABIT* builds a problem-specific approximation of the state space. Unlike EST and the RRT-based approaches, ABIT* separates approximation and search in a way that allows it to process states in order of (inflated) potential solution cost. Unlike FMT*, ABIT* improves its approximation in an anytime fashion. Unlike BIT* and Fast-BIT*, ABIT* avoids wasting computational effort to find a resolution-optimal path in an inaccurate approximation that will change.

\section{Advanced Batch Informed Trees (ABIT*)}%
\label{sec:advanced-batch-informed-trees}

BIT* is an anytime, single-query planner that almost-surely asymptotically finds an optimal solution to a (continuously valued) planning problem. It focuses its approximation of the state space to the region that can improve the current solution using informed sampling~\citep{gammell2018}. This approximation is separated from the search by sampling batches of states and viewing them as an increasingly dense edge-implicit RGG.\

This perspective enables BIT* to perform a series of informed graph-searches in which RGG edges are processed in order of their potential solution quality. This is achieved by sorting an edge queue according to the sum of the current cost-to-come from the start to the edge's parent state, an estimate of the edge cost, and an estimate of the cost-to-go from the edge's child state to a goal. BIT* performs these searches efficiently by reusing information from both previous searches and approximations similar to incremental search algorithms, e.g. LPA*. Full details are in~\citep{gammell2017,gammell2020}.

BIT*'s separation of approximation and search provides a direction for better single-query, almost-surely asymptotically optimal planning algorithms. ABIT* builds on this separation by using more advanced graph-search techniques. It accelerates anytime performance without duplicating search effort, similar to anytime repairing graph-search algorithms, and avoids wasting effort to find the resolution-optimal solution in an approximation that will change, similar to truncated incremental graph-search algorithms.

This accelerated performance is achieved by inflating the cost-to-go estimate in ABIT*'s edge queue. This sacrifices resolution-optimality but achieves faster initial solution times compared to the incremental search used by BIT*. ABIT*'s initial (potentially suboptimal) solution is subsequently repaired without duplicating search effort by tracking inconsistent states, similar to ARA*. A state is considered inconsistent if its cost-to-come has decreased since its outgoing edges were last inserted into the queue.

Fully exploiting every RGG approximation by finding the resolution-optimal path is computationally expensive. ABIT* avoids this by truncating its search as soon as it can guarantee that it has found a solution whose cost is within a factor of the resolution-optimal cost. This allows ABIT* to balance the exploitation of its approximation (i.e., repairing the search) with the exploration of the state space (i.e., increasing the density of the approximation).

\subsection{Notation}%
\label{sec:notation}

The state space of the planning problem is denoted by \( X \), the start state by \( \bm{\mathrm{x}}_{\mathrm{start}} \in X \), and the goal states by \( X_{\mathrm{goal}} \subset X \). The current search is stored as a tree, \( \mathcal{T} = (V, E) \), with vertices, \( V \), and edges, \( E \subset V \times V \). Vertices in the tree are associated with valid states and edges in the tree represent valid connections between states. An edge consists of a parent state, \( \bm{\mathrm{x}}_{\mathrm{p}} \), and a child state, \( \bm{\mathrm{x}}_{\mathrm{c}} \), and is denoted as \( (\bm{\mathrm{x}}_{\mathrm{p}}, \bm{\mathrm{x}}_{\mathrm{c}}) \). The set of inconsistent states is denoted by \( V_{\mathrm{inconsistent}} \) and the states that are not in the tree make up the set \( X_{\mathrm{unconnected}} \).

Let \( \mathbb{R}_{\geq 0}^{\infty} \) denote the union of all nonnegative real numbers with infinity. The function \( \widehat{g}\colon X \to \mathbb{R}_{\geq 0}^{\infty} \) represents an admissible estimate (i.e., a lower bound) of the cost-to-come from the start to a state, \( \bm{\mathrm{x}} \in X \). The function \( g_{\mathcal{T}}\colon X \to \mathbb{R}_{\geq 0}^{\infty} \) represents the cost-to-come from the start to a state, \( \bm{\mathrm{x}} \in X \), through the current search tree, \( \mathcal{T} \). This cost is taken to be infinite for any state with no associated vertex in the tree.

The function \( \widehat{h}\colon X \to \mathbb{R}_{\geq 0}^{\infty} \) represents an admissible estimate of the cost-to-go from a state to a goal. The function \( \widehat{f}\colon X \to \mathbb{R}_{\geq 0}^{\infty} \) represents an admissible estimate of the cost of a path from the start to a goal constrained to go through a state, e.g., \( \widehat{f}(\bm{\mathrm{x}}) \defeq \widehat{g}(\bm{\mathrm{x}}) + \widehat{h}(\bm{\mathrm{x}}) \). This estimate defines the informed set of states that could provide a better solution, \( X_{\widehat{f}} \defeq \{ \bm{\mathrm{x}} \in X \;|\; \widehat{f}(\bm{\mathrm{x}}) \leq c_{\mathrm{current}} \} \), where \( c_{\mathrm{current}} \) is the current solution cost~\citep{gammell2018}. The function \( c\colon X \times X \to \mathbb{R}_{\geq 0}^{\infty} \) denotes the true edge cost between two states. The function \( \widehat{c}\colon X \times X \to \mathbb{R}_{\geq 0}^{\infty} \) is an admissible estimate of this cost.

Let \( A \) be any set and let \( B \) and \( C \) be subsets of \( A \), i.e., \( B, C \subseteq A \). The notation \( B \overset{+}{\leftarrow} C \) is used for \( B \leftarrow B \cup C \) and \( B \overset{-}{\leftarrow} C \) for \( B \leftarrow B \setminus C \). The cardinality of a set is denoted by \( | \cdot | \) and the minimum of an empty set is taken to be infinity. The Lebesgue measure of a set is denoted by \( \lambda(\cdot) \), and the Lebesgue measure of an \( n \)-dimensional unit ball by \( \zeta_{n} \). The number of states per batch is denoted by \( m \).

\newlength{\textfloatsepbackup}
\setlength{\textfloatsepbackup}{\textfloatsep}
\setlength{\textfloatsep}{0pt}

\begin{algorithm}[t]
\footnotesize
\SetInd{0.0em}{0.4em}
\caption{\texttt{ABIT}${}^*(\bm{\mathrm{x}}_{\text{start}}, X_{\text{goal}}, m)$}%
\label{alg:main}
$V \leftarrow \{ \bm{\mathrm{x}}_{\mathrm{start}} \}; E \leftarrow \emptyset; \mathcal{T} \leftarrow (V, E); X_{\mathrm{unconnected}} \leftarrow X_{\mathrm{goal}}$\;\label{line:main:init:start}
$q \leftarrow \left| V \right| + \left| X_{\mathrm{unconnected}} \right|; \varepsilon_{\mathrm{infl}} \leftarrow \infty; \varepsilon_{\mathrm{trunc}} \leftarrow \infty$\;\label{line:main:init:r}
$V_{\mathrm{closed}} \leftarrow \emptyset; V_{\mathrm{inconsistent}} \leftarrow \emptyset$\;\label{line:main:init:closed}
$\mathcal{Q} \leftarrow \mpsexpand{\{\bm{\mathrm{x}}_{\mathrm{start}}\}, \mathcal{T}, X_{\mathrm{unconnected}}, \infty}$\;\label{line:main:init:end}
\Repeat{{\normalfont$\mathtt{stop}$}}{
  \eIf{\normalfont$\mpsissearchmarkedfinished{}$}{\label{line:main:exhausted:search}
    \eIf{\normalfont$\mpsupdateapproximation{\varepsilon_{\mathrm{infl}}, \varepsilon_{\mathrm{trunc}}}$}{\label{line:main:approximation:start}\label{line:main:exhausted:approximation}
      $\mpsprune{\mathcal{T}, X_{\mathrm{unconnected}}, X_{\mathrm{goal}}}$\;\label{line:main:approximation:prune}
      $X_{\mathrm{unconnected}} \overset{+}{\leftarrow} \mpssample{m, X_{\mathrm{goal}}}$\;
      $q \leftarrow \left| V \right| + \left| X_{\mathrm{unconnected}} \right|$\;
      $\mathcal{Q} \leftarrow \mpsexpand{\{\bm{\mathrm{x}}_{\mathrm{start}}\}, \mathcal{T}, X_{\mathrm{unconnected}}, r(q)}$\;
    }{
      $\mathcal{Q} \overset{+}{\leftarrow} \mpsexpand{V_{\mathrm{inconsistent}}, \mathcal{T}, X_{\mathrm{unconnected}}, r(q)}$\;
    }\label{line:main:approximation:end}
    $\varepsilon_{\mathrm{infl}} \leftarrow \mpsupdateinflationfactor{}$\;\label{line:main:update:inflation}
    $\varepsilon_{\mathrm{trunc}} \leftarrow \mpsupdatetruncationfactor{}$\;\label{line:main:update:truncation}
    $V_{\mathrm{closed}} \leftarrow \emptyset$; $V_{\mathrm{inconsistent}} \leftarrow \emptyset$\;
    $\mpsmarksearchunfinished{}$\;
  }{
    \vspace*{-0.3em}
    $(\bm{\mathrm{x}}_{\mathrm{p}}, \bm{\mathrm{x}}_{\mathrm{c}}) \leftarrow \mathop{\arg\min}\limits_{(\bm{\mathrm{x}}_{i}, \bm{\mathrm{x}}_{j}) \in \mathcal{Q}}  \left\{ g_{\mathcal{T}} (\bm{\mathrm{x}}_{i}) + \widehat{c}(\bm{\mathrm{x}}_{i}, \bm{\mathrm{x}}_{j}) + \varepsilon_{\mathrm{infl}}\widehat{h}(\bm{\mathrm{x}}_{j}) \right\}$\;\label{line:main:search:start}\label{line:main:search:queue}

    $\mathcal{Q} \overset{-}{\leftarrow} (\bm{\mathrm{x}}_{\mathrm{p}}, \bm{\mathrm{x}}_{\mathrm{c}}) $\;
    \uIf{\normalfont$(\bm{\mathrm{x}}_{\mathrm{p}}, \bm{\mathrm{x}}_{\mathrm{c}}) \in E$}{
      \eIf{\normalfont$\bm{\mathrm{x}}_{\mathrm{c}} \in V_{\mathrm{closed}}$}{\label{line:main:search:incons:freebie:start}
        $V_{\mathrm{inconsistent}} \overset{+}{\leftarrow} \bm{\mathrm{x}}_{\mathrm{c}}$\;
      }{
        $\mathcal{Q} \overset{+}{\leftarrow} \mpsexpand{\{\bm{\mathrm{x}}_{\mathrm{c}}\}, \mathcal{T}, X_{\mathrm{unconnected}}, r(q)}$\;
        $V_{\mathrm{closed}} \overset{+}{\leftarrow} \bm{\mathrm{x}}_{\mathrm{c}}$\;\label{line:main:search:incons:freebie:end}
      }
    }
    \vspace*{-0.2em}
    \uElseIf{\normalfont$\varepsilon_{\mathrm{trunc}}\hspace*{-0.2em}\left(g_{\mathcal{T}}(\bm{\mathrm{x}}_{\mathrm{p}}) + \widehat{c}(\bm{\mathrm{x}}_{\mathrm{p}}, \bm{\mathrm{x}}_{\mathrm{c}}) + \widehat{h}(\bm{\mathrm{x}}_{\mathrm{c}}) \right) \leq \min\limits_{\mathclap{\bm{\mathrm{x}} \in X_{\mathrm{goal}}}} \left\{ g_{\mathcal{T}}(\bm{\mathrm{x}}) \right\}$}{\label{line:main:checks:first}
      \vspace*{-0.2em}
      \If{\normalfont$g_{\mathcal{T}}(\bm{\mathrm{x}}_{\mathrm{p}}) + \widehat{c}(\bm{\mathrm{x}}_{\mathrm{p}}, \bm{\mathrm{x}}_{\mathrm{c}}) < g_{\mathcal{T}}(\bm{\mathrm{x}}_{\mathrm{c}})$}{\label{line:main:checks:second}
        \If{\normalfont$g_{\mathcal{T}}(\bm{\mathrm{x}}_{\mathrm{p}}) + c(\bm{\mathrm{x}}_{\mathrm{p}}, \bm{\mathrm{x}}_{\mathrm{c}}) + \widehat{h}(\bm{\mathrm{x}}_{\mathrm{c}}) < \min\limits_{\mathclap{\bm{\mathrm{x}} \in X_{\mathrm{goal}}}} \left\{ g_{\mathcal{T}}(\bm{\mathrm{x}}) \right\}$}{\label{line:main:checks:third}
          \If{\normalfont$g_{\mathcal{T}}(\bm{\mathrm{x}}_{\mathrm{p}}) + c(\bm{\mathrm{x}}_{\mathrm{p}}, \bm{\mathrm{x}}_{\mathrm{c}}) < g_{\mathcal{T}}(\bm{\mathrm{x}}_{\mathrm{c}})$}{\label{line:main:checks:fourth}
            \eIf{\normalfont$\bm{\mathrm{x}}_{\mathrm{c}} \in V$}{
              $E \overset{-}{\leftarrow} \{ (\bm{\mathrm{x}}_{\mathrm{prev}}, \bm{\mathrm{x}}_{\mathrm{c}}) \in E \}$\;\label{line:main:rewiring}
            }{
              $X_{\mathrm{unconnected}} \overset{-}{\leftarrow} \bm{\mathrm{x}}_{\mathrm{c}}$\;\label{line:main:remove:samples}
              $V \overset{+}{\leftarrow} \bm{\mathrm{x}}_{\mathrm{c}}$\;\label{line:main:add:vertex}
            }
            $E \overset{+}{\leftarrow} (\bm{\mathrm{x}}_{\mathrm{p}}, \bm{\mathrm{x}}_{\mathrm{c}})$\;\label{line:main:add:edge}
            \eIf{\normalfont$\bm{\mathrm{x}}_{\mathrm{c}} \in V_{\mathrm{closed}}$}{\label{line:main:search:incons:start}
              $V_{\mathrm{inconsistent}} \overset{+}{\leftarrow} \bm{\mathrm{x}}_{\mathrm{c}}$\;
            }{
              $\mathcal{Q} \overset{+}{\leftarrow} \mpsexpand{\{\bm{\mathrm{x}}_{\mathrm{c}}\}, \mathcal{T}, X_{\mathrm{unconnected}}, r(q)}$\;
              $V_{\mathrm{closed}} \overset{+}{\leftarrow} \bm{\mathrm{x}}_{\mathrm{c}}$\;\label{line:main:search:end}\label{line:main:search:incons:end}
            }
          }
        }
      }
    }
    \lElse{$\mpsmarksearchfinished{}$\label{line:main:mark}}
  }
}
\end{algorithm}

\setlength{\textfloatsep}{\textfloatsepbackup}

\subsection{Initialization (Algorithm~\ref{alg:main}, Lines~\ref{line:main:init:start}-\ref{line:main:init:end})}%
\label{sec:initialization}

ABIT* starts by initializing the search tree with the start state as its root. The set of unconnected states initially only contains the goal states (line~\ref{line:main:init:start}). The inflation factor, \( \varepsilon_{\mathrm{infl}} \geq 1 \), and the truncation factor, \( \varepsilon_{\mathrm{trunc}} \geq 1 \), are initialized to be infinitely large (line~\ref{line:main:init:r}). The edge-queue, \( \mathcal{Q} \), holds all edges from the start state to the goal states (line~\ref{line:main:init:end}).

\subsection{Approximation (Algorithm~\ref{alg:main}, Lines~\ref{line:main:approximation:start}-\ref{line:main:approximation:end})}%
\label{sec:approximation}

ABIT* uses informed sampling~\citep{gammell2018} to focus its RGG approximation on the relevant region of the state space. The accuracy of this approximation increases with the number of sampled states but so does its complexity. This complexity is reduced by pruning states that cannot improve the current solution (line~\ref{line:main:approximation:prune} and Alg.~\ref{alg:prune}) and shrinking the connection radius as more states are sampled. The radius, \( r \), is updated as in~\citep{karaman2011}, using the measure of the informed set, as in~\citep{gammell2018},%
\vspace*{-0.2em}%
\begin{align*}
  r(q) \defeq \eta {\left( 2 \left( 1 + \frac{1}{n} \right) \left(\frac{\lambda\left( X_{\widehat{f}} \right)}{\zeta_{n}}\right) \left(\frac{\log(q)}{q}\right) \right)}^{\frac{1}{n}},
\end{align*}
where \( q \) is the number of sampled states in the informed set, \( \eta > 1 \) is a tuning parameter, and \( n \) is the state space dimension. Faster-decreasing radii are provided in~\citep{janson2015,janson2018} but are not used in this paper to isolate the reasons for ABIT*'s improved performance relative to existing algorithms.

\subsection{Search (Algorithm~\ref{alg:main}, Lines~\ref{line:main:search:start}-\ref{line:main:search:end})}%
\label{sec:search}

ABIT* delays expensive computation of true edge cost (e.g., collision checks) with a lazy search similar to an edge-queue version of Anytime Truncated D* (ATD*)~\citep{aine2016b}. This queue is ordered lexicographically by (inflated) potential solution cost and then cost-to-come. A search iteration starts by removing the edge with the lowest queue value from the queue (line~\ref{line:main:search:start}). If this edge is part of the search tree, then the child state is expanded (i.e., its outgoing edges are added to the queue) if it has not already been expanded during the current search (lines~\ref{line:main:search:incons:freebie:start}--\ref{line:main:search:incons:freebie:end} and Alg.~\ref{alg:expand}). ABIT* otherwise checks if the new edge can possibly contribute to a solution better than the current one (lines~\ref{line:main:checks:first}--\ref{line:main:checks:fourth}).

An edge that passes these checks improves the cost-to-come of the child state and possibly the current solution. If the child state is already part of the tree, adding this edge constitutes a rewiring (line~\ref{line:main:rewiring}). Otherwise, this state is removed from the set of unconnected states (line~\ref{line:main:remove:samples}) and added to the search tree (line~\ref{line:main:add:vertex}). In both cases, the edge is added to the search tree (line~\ref{line:main:add:edge}).

After adding an edge, the child state is expanded unless it has already been expanded during the current search, in which case it is added to the set of inconsistent vertices (lines~\ref{line:main:search:incons:start}--\ref{line:main:search:incons:end}).

\begin{algorithm}[t]
\scriptsize
\SetInd{0.0em}{0.4em}
\caption{\texttt{expand}$(\{ \bm{\mathrm{x}}_{i} \}, \mathcal{T}, X_{\mathrm{unconnected}}, r)$}%
\label{alg:expand}
$E_{\mathrm{out}} \leftarrow \emptyset$\;
\ForAll{\normalfont$\bm{\mathrm{x}}_{\mathrm{p}} \;\bm{\mathrm{in}}\; \{ \bm{\mathrm{x}}_{i} \}$}{
  $E_{\mathrm{out}} \overset{+}{\leftarrow} \{ (\bm{\mathrm{x}}_{\mathrm{p}}, \bm{\mathrm{x}}_{\mathrm{c}}) \in E \}$\;
  \ForAll{\normalfont$\bm{\mathrm{x}}_{\mathrm{c}} \;\bm{\mathrm{in}}\; \big\{ \bm{\mathrm{x}} \in \{ X_{\mathrm{unconnected}} \cup V \} \;\big|\; \| \bm{\mathrm{x}}_{\mathrm{p}} - \bm{\mathrm{x}} \| \leq r \big\}$}{
    \If{\normalfont$\widehat{g}(\bm{\mathrm{x}}_{\mathrm{p}}) + \widehat{c}(\bm{\mathrm{x}}_{\mathrm{p}}, \bm{\mathrm{x}}_{\mathrm{c}}) + \widehat{h}(\bm{\mathrm{x}}_{\mathrm{c}}) \leq \min\limits_{\mathclap{\bm{\mathrm{x}} \in X_{\mathrm{goal}}}}\left\{g_{\mathcal{T}}(\bm{\mathrm{x}})\right\}$}{
      \If{\normalfont$\widehat{g}(\bm{\mathrm{x}}_{\mathrm{p}}) + \widehat{c}(\bm{\mathrm{x}}_{\mathrm{p}}, \bm{\mathrm{x}}_{\mathrm{c}}) \leq g_{\mathcal{T}}(\bm{\mathrm{x}}_{\mathrm{c}})$}{
        $E_{\mathrm{out}} \overset{+}{\leftarrow} (\bm{\mathrm{x}}_{\mathrm{p}}, \bm{\mathrm{x}}_{\mathrm{c}})$\;
      }
    }
  }
}
\Return{\normalfont$E_{\mathrm{out}}$}
\end{algorithm}
\begin{algorithm}[t]
\scriptsize
\caption{$\mpsprune{\mathcal{T}, X_{\mathrm{unconnected}}, X_{\mathrm{goal}}}$}%
\label{alg:prune}
$X_{\mathrm{unconnected}} \overset{-}{\leftarrow} \Big\{ \bm{\mathrm{x}} \in X_{\mathrm{unconnected}} \Big|\, \widehat{f}(\bm{\mathrm{x}}) \geq \min\limits_{\mathclap{\bm{\mathrm{x}} \in X_{\mathrm{goal}}}} \left\{ g_{\mathcal{T}}(\bm{\mathrm{x}}) \right\} \Big\}$\;
\vspace*{-0.1em}
$V \overset{-}{\leftarrow} \Big\{ \bm{\mathrm{x}} \in V \;\Big|\; \widehat{f}(\bm{\mathrm{x}}) > \min\limits_{\mathclap{\bm{\mathrm{x}} \in X_{\mathrm{goal}}}} \left\{ g_{\mathcal{T}}(\bm{\mathrm{x}}) \right\} \Big\}$\;
$E \overset{-}{\leftarrow} \Big\{ (\bm{\mathrm{x}}_{\mathrm{p}}, \bm{\mathrm{x}}_{\mathrm{c}}) \in E \Big|\, \widehat{f}(\bm{\mathrm{x}}_{\mathrm{p}}) > \min\limits_{\mathclap{\bm{\mathrm{x}} \in X_{\mathrm{goal}}}}\left\{ g_{\mathcal{T}}(\bm{\mathrm{x}}) \right\} \; \bm{\mathrm{or}}\, \widehat{f}(\bm{\mathrm{x}}_{\mathrm{c}}) > \min\limits_{\mathclap{\bm{\mathrm{x}} \in X_{\mathrm{goal}}}}\left\{ g_{\mathcal{T}}(\bm{\mathrm{x}}) \right\}\Big\}$\;
$X_{\mathrm{unconnected}} \overset{+}{\leftarrow} \{ \bm{\mathrm{x}}_{\mathrm{c}} \in V \;|\; \not\exists \; \bm{\mathrm{x}}_{\mathrm{p}} \in V \text{ s.\ t.\ } (\bm{\mathrm{x}}_{\mathrm{p}}, \bm{\mathrm{x}}_{\mathrm{c}}) \in E \}$\;
$V \overset{-}{\leftarrow} \{ \bm{\mathrm{x}}_{\mathrm{c}} \in V \;|\; \not\exists \; \bm{\mathrm{x}}_{\mathrm{p}} \in V \text{ s.\ t.\ } (\bm{\mathrm{x}}_{\mathrm{p}}, \bm{\mathrm{x}}_{\mathrm{c}}) \in E \}$\;
\end{algorithm}

\subsection{Approximation, Inflation, and Truncation Update Policies}%
\label{sec:infl-trunc-fact}

The approximation is updated when a desired bound on the resolution optimality is achieved, which depends on the inflation and truncation factors (line~\ref{line:main:exhausted:approximation}). These factors are updated after each search of the current RGG (lines~\ref{line:main:update:inflation} and~\ref{line:main:update:truncation}). A high inflation factor biases the search towards the goal and decreases solution times but results in loose bounds on the solution quality. A low inflation factor results in a search that requires more computational effort to complete but achieves tighter bounds on the solution quality. A high truncation factor promotes exploration of the region of the state space that could potentially contain better solutions by truncating the search once a loose bound on the solution quality is achieved, which facilitates adding more samples. A low truncation factor promotes exploiting the current approximation of the state space as the search is not truncated until a tight bound on the solution quality is guaranteed.

The update policies of these two factors are user-tuned parameters that balance exploiting the current RGG with exploring the state space. Section~\ref{sec:experiments} presents the specific policies used for the experimental results.

\section{Formal Analysis}%
\label{sec:seper-appr-search}

This paper uses Definition 24 in~\citep{karaman2011} as the definition of almost-sure asymptotic optimality. Note that any sampling-based planner is almost-surely asymptotically optimal if (i)~its underlying graph almost-surely contains an asymptotically optimal path, and (ii)~its underlying graph-search is asymptotically resolution-optimal. These conditions are sufficient but not necessary.

\subsection{Almost-Sure Existence of an Asymptotically Optimal Path}%
\label{sec:almost-sure-exist}

ABIT* uses the same increasingly dense RGG approximation as BIT*. Since BIT* is an almost-surely asymptotically optimal algorithm~\citep{gammell2020}, this approximation must almost-surely contain an asymptotically optimal path.

\subsection{Asymptotically Resolution-Optimal Search}%
\label{sec:resol-optim-guar}

Theorem~\ref{thm:asymptotically-optimal-search} states that ABIT*'s search processes at least all of the edges processed by ATD*, which is an anytime, incremental search algorithm that finds a solution within \( \varepsilon_{\mathrm{trunc}}\varepsilon_{\mathrm{infl}} \) of the optimum~\citep{aine2016b}. Since ABIT* updates the cost-to-come of any vertex under the same condition as ATD*, ABIT* also finds a solution whose cost is within \( \varepsilon_{\mathrm{trunc}}\varepsilon_{\mathrm{infl}} \) of the optimum. ABIT* therefore asymptotically finds a resolution-optimal path when the product \( \varepsilon_{\mathrm{trunc}}\varepsilon_{\mathrm{infl}} \) tends to one as the number of samples approaches infinity,
\begin{align*}
  \lim_{q \to \infty}\varepsilon_{\mathrm{trunc}}\varepsilon_{\mathrm{infl}} = 1.
\end{align*}

\begin{theorem}\label{thm:asymptotically-optimal-search}
  ABIT*'s search processes at least all of the edges that would be processed by ATD* for a given RGG.\
\end{theorem}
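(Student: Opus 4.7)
The plan is to establish a step-by-step correspondence between the vertex-queue search of ATD* and the edge-queue search of ABIT* on the same RGG, and then argue by induction on the search step that every edge ATD* processes is also processed by ABIT*. The natural correspondence maps a single vertex expansion in ATD* to the collection of edge-dequeue events in ABIT* whose parent is that vertex, since expanding a vertex in ATD* is equivalent to relaxing all of its outgoing edges.

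First, I would verify that ABIT*'s edge-queue ordering is compatible with ATD*'s vertex ordering: the queue value \( g_{\mathcal{T}}(\bm{\mathrm{x}}_{\mathrm{p}}) + \widehat{c}(\bm{\mathrm{x}}_{\mathrm{p}}, \bm{\mathrm{x}}_{\mathrm{c}}) + \varepsilon_{\mathrm{infl}} \widehat{h}(\bm{\mathrm{x}}_{\mathrm{c}}) \) of an outgoing edge of \( \bm{\mathrm{x}}_{\mathrm{p}} \) is never less than the corresponding ATD* vertex key \( g_{\mathcal{T}}(\bm{\mathrm{x}}_{\mathrm{p}}) + \varepsilon_{\mathrm{infl}} \widehat{h}(\bm{\mathrm{x}}_{\mathrm{p}}) \), so edges are dequeued in an order consistent with the order in which ATD* would expand the associated parent vertices. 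Second, I would show that the \texttt{expand} routine (Algorithm~\ref{alg:expand}) inserts every outgoing edge that ATD* would consider during a vertex expansion; the admissibility-based filters in \texttt{expand} only discard edges whose lower bounds already exceed either the incumbent solution cost or the current \( g_{\mathcal{T}}(\bm{\mathrm{x}}_{\mathrm{c}}) \), and ATD* under the same admissible heuristics would discard them as well. Third, the truncation check on line~\ref{line:main:checks:first} uses precisely the inequality \( \varepsilon_{\mathrm{trunc}}(g_{\mathcal{T}}(\bm{\mathrm{x}}_{\mathrm{p}}) + \widehat{c}(\bm{\mathrm{x}}_{\mathrm{p}}, \bm{\mathrm{x}}_{\mathrm{c}}) + \widehat{h}(\bm{\mathrm{x}}_{\mathrm{c}})) \leq \min_{\bm{\mathrm{x}} \in X_{\mathrm{goal}}} g_{\mathcal{T}}(\bm{\mathrm{x}}) \) that governs ATD*'s termination condition, so ABIT* cannot mark the search finished before ATD* would. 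Finally, inconsistent vertices are deferred in both algorithms during the current search and re-processed in the next one via \texttt{expand} on line~\ref{line:main:approximation:end}, mirroring ATD*'s reprocessing of \textsc{inconsistent} vertices.

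The main obstacle will be reconciling ABIT*'s lazy, estimate-based edge evaluation with ATD*'s eager, true-cost relaxation: ABIT* initially orders and filters using \( \widehat{c} \) rather than \( c \) and may enqueue and then discard edges without ever evaluating their true cost. The argument must therefore carry an inductive invariant that (i) matches the \( g_{\mathcal{T}} \) values and queue contents of the two algorithms at corresponding steps, (ii) uses admissibility of \( \widehat{c} \), \( \widehat{g} \), and \( \widehat{h} \) to show that every edge ATD* still has to process is either already in ABIT*'s queue or will be inserted by a later \texttt{expand} call, and (iii) shows ABIT*'s truncation test is no stricter than ATD*'s, so ABIT* cannot halt its search while ATD* would still process additional edges.
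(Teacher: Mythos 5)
Your core route coincides with the paper's: recast ATD*'s vertex queue as an equivalent edge queue and then show that ABIT*'s truncation/termination test is no stricter than ATD*'s, using admissibility of \( \widehat{c} \), \( \varepsilon_{\mathrm{infl}} \geq 1 \), and \( g_{\mathcal{T}}(\bm{\mathrm{x}}) \leq g_{\mathrm{p}}[\bm{\mathrm{x}}] \). Your item (iii) in the closing paragraph is exactly the content of the paper's proof, which dispenses with any step-by-step induction and argues purely from the two termination criteria.

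However, one step of your plan is false, and the induction you build on it would fail. You claim the edge value \( g_{\mathcal{T}}(\bm{\mathrm{x}}_{\mathrm{p}}) + \widehat{c}(\bm{\mathrm{x}}_{\mathrm{p}}, \bm{\mathrm{x}}_{\mathrm{c}}) + \varepsilon_{\mathrm{infl}}\widehat{h}(\bm{\mathrm{x}}_{\mathrm{c}}) \) is never less than the parent key \( g_{\mathcal{T}}(\bm{\mathrm{x}}_{\mathrm{p}}) + \varepsilon_{\mathrm{infl}}\widehat{h}(\bm{\mathrm{x}}_{\mathrm{p}}) \), which requires \( \widehat{c}(\bm{\mathrm{x}}_{\mathrm{p}}, \bm{\mathrm{x}}_{\mathrm{c}}) + \varepsilon_{\mathrm{infl}}\widehat{h}(\bm{\mathrm{x}}_{\mathrm{c}}) \geq \varepsilon_{\mathrm{infl}}\widehat{h}(\bm{\mathrm{x}}_{\mathrm{p}}) \). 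Even a consistent heuristic only gives \( \varepsilon_{\mathrm{infl}}\widehat{h}(\bm{\mathrm{x}}_{\mathrm{p}}) \leq \varepsilon_{\mathrm{infl}}\widehat{c}(\bm{\mathrm{x}}_{\mathrm{p}}, \bm{\mathrm{x}}_{\mathrm{c}}) + \varepsilon_{\mathrm{infl}}\widehat{h}(\bm{\mathrm{x}}_{\mathrm{c}}) \); when the heuristic is tight along the edge the claimed inequality is violated by \( (\varepsilon_{\mathrm{infl}} - 1)\,\widehat{c}(\bm{\mathrm{x}}_{\mathrm{p}}, \bm{\mathrm{x}}_{\mathrm{c}}) > 0 \). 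This loss of dequeue-order monotonicity under an inflated heuristic is precisely why ARA*- and ATD*-style searches must track inconsistent vertices, so no order-preserving step-by-step correspondence exists and an induction "on the search step" cannot be set up this way. Fortunately the theorem only asserts containment of the \emph{set} of processed edges, so the ordering lemma is unnecessary: it suffices to show, as the paper does and as your final paragraph correctly targets, that every edge ATD* would still process is in (or will be inserted into) ABIT*'s queue with value below \( g_{\mathcal{T}}(\bm{\mathrm{x}}_{\mathrm{goal}})/\varepsilon_{\mathrm{trunc}} \), so ABIT* cannot satisfy its termination inequality before processing it. Drop the ordering claim and the induction, and run the comparison directly on the two termination criteria.
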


\begin{proof}
ATD* can handle improved and worsened state connections, but adding states and edges to a graph can only improve connections. Therefore only states with improved cost-to-come (called overconsistent in~\citep{aine2016b}) are considered.

ATD*'s vertex queue is first converted to a compatible edge queue and it is then shown that ATD*'s termination criterion is stricter than that of ABIT*.

ATD* computes a sort key for every state, \( \bm{\mathrm{x}} \), in its queue,
\begin{align*}
  \mathrm{key}_{\mathrm{ATD}^{*}}(\bm{\mathrm{x}}) \defeq g_{\mathrm{p}}[\bm{\mathrm{x}}] + \varepsilon_{\mathrm{infl}} \widehat{h}(\bm{\mathrm{x}}).
\end{align*}
The cost-to-come label, \( g_{\mathrm{p}}[\bm{\mathrm{x}}] \), is recursively defined as
\begin{align*}
 g_{\mathrm{p}}[\bm{\mathrm{x}}] \defeq \min_{\mathclap{\bm{\mathrm{x}}_{\mathrm{p}} \in X_{\mathrm{p}}(\bm{\mathrm{x}})}} \left\{ g_{\mathrm{p}}[\bm{\mathrm{x}}_{\mathrm{p}}] + c(\bm{\mathrm{x}}_{\mathrm{p}}, \bm{\mathrm{x}}) \right\},
\end{align*}
where \( X_{\mathrm{p}}(\bm{\mathrm{x}}) \) denotes the discovered potential parents of \( \bm{\mathrm{x}} \) and the base case is \( g_{\mathrm{p}}[\bm{\mathrm{x}}_{\mathrm{start}}] = 0 \). ATD* processes vertices in its vertex queue, \( \mathcal{Q}_{\mathrm{ATD}^{*}}^{\mathrm{V}} \), in order of ascending key values,%
\vspace*{-0.2em}%
\begin{align*}
  \bm{\mathrm{x}}_{\mathrm{next}} = \mathop{\arg\,\min}_{\bm{\mathrm{x}} \in \mathcal{Q}_{\mathrm{ATD}^{*}}^{\mathrm{V}}} \left\{ \mathrm{key}_{\mathrm{ATD}^{*}}(\bm{\mathrm{x}}) \right\}.
\end{align*}%
Whenever a better connection to a state in the queue is found, the key of this state is updated and the queue is resorted.

The queue could alternatively contain multiple instances of the same state, each with a different parent and key value. Selecting the minimum from this queue would ensure that the best discovered connection for each state is considered first. This would be equivalent to an edge version of ATD* where the next connection from the edge queue, \( \mathcal{Q}_{\mathrm{ATD}^{*}}^{\mathrm{E}} \), is%
\vspace*{-0.2em}%
\begin{align*}
  {(\bm{\mathrm{x}}_{\mathrm{p}}, \bm{\mathrm{x}}_{\mathrm{c}})}_{\mathrm{next}} = \hspace*{-1em}\mathop{\arg\,\min}_{(\bm{\mathrm{x}}_{i}, \bm{\mathrm{x}}_{j}) \in \mathcal{Q}_{\mathrm{ATD}^{*}}^{\mathrm{E}}} \hspace*{-1em} \left\{ g_{\mathrm{p}}[\bm{\mathrm{x}}_{i}] + c(\bm{\mathrm{x}}_{i}, \bm{\mathrm{x}}_{j}) + \varepsilon_{\mathrm{infl}}\widehat{h}(\bm{\mathrm{x}}_{j}) \right\}.
\end{align*}%
\vspace*{-0.5em}%

ATD*'s inner loop terminates if for any goal \( \bm{\mathrm{x}}_{\mathrm{goal}} \in X_{\mathrm{goal}} \)
\begin{align*}
\min_{(\bm{\mathrm{x}}_{i}, \bm{\mathrm{x}}_{j}) \in \mathcal{Q}_{\mathrm{ATD}^{*}}^{\mathrm{E}}} \hspace*{-1.5em} \big\{ g_{\mathrm{p}}[\bm{\mathrm{x}}_{i}] + c(\bm{\mathrm{x}}_{i}, \bm{\mathrm{x}}_{j}) + \varepsilon_{\mathrm{infl}}\widehat{h}(\bm{\mathrm{x}}_{j}) \big\} \\[-0.5em] \geq \frac{\min\left\{g_{\mathrm{p}}[\bm{\mathrm{x}}_{\mathrm{goal}}], g_{\mathcal{T}}(\bm{\mathrm{x}}_{\mathrm{goal}})\right\}}{\varepsilon_{\mathrm{trunc}}}\hspace*{-0.1em}.
\end{align*}%
\vspace*{-1.2em}%

ABIT*'s search terminates if for any goal \( \bm{\mathrm{x}}_{\mathrm{goal}} \in X_{\mathrm{goal}} \)%
\vspace*{-0.3em}%
\begin{align*}
\min_{(\bm{\mathrm{x}}_{i}, \bm{\mathrm{x}}_{j}) \in \mathcal{Q}_{\mathrm{ABIT}^{*}}} \hspace*{-1.5em} \big\{ g_{\mathcal{T}}(\bm{\mathrm{x}}_{i}) + \widehat{c}(\bm{\mathrm{x}}_{i}, \bm{\mathrm{x}}_{j}) + \widehat{h}(\bm{\mathrm{x}}_{j}) \big\} \geq \frac{g_{\mathcal{T}}(\bm{\mathrm{x}}_{\mathrm{goal}})}{\varepsilon_{\mathrm{trunc}}}.
\end{align*}

This is less strict, as the heuristic, \( \widehat{c} \), is admissible, the inflation factor, \( \varepsilon_{\mathrm{infl}} \), is greater than or equal to one, and for all states, \( \bm{\mathrm{x}}_{i} \in X \), it holds that \( g_{\mathrm{p}}[\bm{\mathrm{x}}_{i}] \geq g_{\mathcal{T}}(\bm{\mathrm{x}}_{i}) \) as rewirings can only improve the cost-to-come to states. ABIT* therefore considers at least all edges that ATD* would consider.
\end{proof}

\begin{figure}[t]
  \centering
  \subfloat[\footnotesize Wall gap\label{fig:wall_gap}]{% This file was created by matlab2tikz.
%
%The latest updates can be retrieved from
%  http://www.mathworks.com/matlabcentral/fileexchange/22022-matlab2tikz-matlab2tikz
%where you can also make suggestions and rate matlab2tikz.
%
%
\begin{tikzpicture}

\begin{axis}[%
width=0.45\columnwidth,
height=0.45\columnwidth,
at={(0,0)},
scale only axis,
xmin=-1,
xmax=1,
xtick={\empty},
xticklabels={\empty},
ymin=-1,
ymax=1,
ytick={\empty},
yticklabels={\empty},
axis background/.style={fill=white},
after end axis/.code={
  \draw [<->, black] (axis cs:-1.0, -1.1) -- node [fill=white, inner sep=1pt] {\scriptsize$2.0$} (axis cs:1.0, -1.1);
  \draw [densely dotted] (axis cs:-1.0, -1.1) -- (axis cs:-1.0, -1.0);
  \draw [densely dotted] (axis cs: 1.0, -1.1) -- (axis cs: 1.0, -1.0);
  \draw [<->, black] (axis cs:-1.1, -1.0) -- node [fill=white, inner sep=1pt, sloped] {\scriptsize$2.0$} (axis cs:-1.1, 1.0);
  \draw [densely dotted] (axis cs:-1.1, -1.0) -- (axis cs:-1.0, -1.0);
  \draw [densely dotted] (axis cs:-1.1,  1.0) -- (axis cs:-1.0,  1.0);
  \draw [<->, black] (axis cs:-0.5, 0.85) -- node [inner sep=1pt, fill=white] {\scriptsize$1.0$} (axis cs:0.5, 0.85);
  \draw [densely dotted] (axis cs:-0.5, 0.85) -- (axis cs:-0.5, -0.25);
  \draw [densely dotted] (axis cs: 0.5, 0.85) -- (axis cs: 0.5, 0.0);
  \draw [<->, black] (axis cs:-1.0, -0.25) -- node [inner sep=1pt, fill=white] {\scriptsize$0.5$} (axis cs:-0.5, -0.25);
  \draw [<->, black] (axis cs:-0.6, 1.0) -- node [left = 0.1cm, inner sep=1pt, fill=white] {\scriptsize$0.3$} (axis cs:-0.6, 0.7);
  \draw [<->, black] (axis cs:0.5, -1.0) -- node [sloped, midway, inner sep=1pt, fill=white] {\scriptsize$1.0$} (axis cs:0.5, 0.0);
  \draw [densely dotted] (axis cs:-0.6, 0.7) -- (axis cs:-0.125, 0.7);
  \draw [->, black] (axis cs:-0.6, 0.46) -- (axis cs:-0.6, 0.36);
  \draw [->, black] (axis cs:-0.6, 0.23) -- (axis cs:-0.6, 0.33);
  \node [left = 0.1cm, inner sep=1pt] at (axis cs: -0.6, 0.345) {\scriptsize$0.03$};
  \draw [densely dotted] (axis cs:-0.6, 0.37) -- (axis cs:-0.125, 0.37);
  \draw [densely dotted] (axis cs:-0.6, 0.32) -- (axis cs:-0.125, 0.32);
  \draw [->, black] (axis cs:-0.3, -0.75) -- (axis cs:-0.125, -0.75);
  \draw [->, black] (axis cs:0.3, -0.75) -- (axis cs:0.125, -0.75);
  \node [inner sep=0pt] at (axis cs: -0.42, -0.75) {\scriptsize$0.3$};
  \draw [<->, black] (axis cs:-1.0, -0.5) -- node [sloped, midway, inner sep=1pt, fill=white] {\scriptsize$0.85$} (axis cs:-0.15, -0.5);
  \node [left = 0.25em, fill=white, inner sep=0.9pt] at (axis cs:-0.5, 0) {\scriptsize\color{muired}start\vphantom{g}};
  \node [right = 0.25em, fill=white, inner sep=1pt] at (axis cs:0.5, 0) {\scriptsize\color{muigreen}goal};
  \node [circle, inner sep=1pt, fill=muired, draw=muired] at (axis cs:-0.5, 0) {};
  \node [circle, inner sep=1pt, fill=muigreen, draw=muigreen] at (axis cs:0.5, 0) {};
}]

\addplot[area legend, draw=black, fill=black, forget plot]
table[row sep=crcr] {%
x	y\\
-0.125	-1\\
0.125	-1\\
0.125	0.7\\
-0.125	0.7\\
}--cycle;

\addplot[area legend, draw=white, fill=white, forget plot]
table[row sep=crcr] {%
x	y\\
-0.13	0.33\\
0.13	0.33\\
0.13	0.36\\
-0.13	0.36\\
}--cycle;

\end{axis}
\end{tikzpicture}%
%%% Local Variables:
%%% mode: latex
%%% TeX-master: "../root"
%%% End:}
  \hfill
  \subfloat[\footnotesize Random rectangles\label{fig:random_world}]{% This file was created by matlab2tikz.
%
%The latest updates can be retrieved from
%  http://www.mathworks.com/matlabcentral/fileexchange/22022-matlab2tikz-matlab2tikz
%where you can also make suggestions and rate matlab2tikz.
%
\begin{tikzpicture}

\begin{axis}[%
width=0.45\columnwidth,
height=0.45\columnwidth,
at={(0,0)},
scale only axis,
xmin=-1,
xmax=1,
xtick={\empty},
xticklabels={\empty},
ymin=-1,
ymax=1,
ytick={\empty},
yticklabels={\empty},
axis background/.style={fill=white},
after end axis/.code={
  \draw [<->, black] (axis cs:-1.0, -1.1) -- node [fill=white, inner sep=1pt] {\scriptsize$2.0$} (axis cs:1.0, -1.1);
  \draw [densely dotted] (axis cs:-1.0, -1.1) -- (axis cs:-1.0, -1.0);
  \draw [densely dotted] (axis cs: 1.0, -1.1) -- (axis cs: 1.0, -1.0);
  \draw [<->, black] (axis cs:-1.1, -1.0) -- node [fill=white, inner sep=1pt, sloped] {\scriptsize$2.0$} (axis cs:-1.1, 1.0);
  \draw [densely dotted] (axis cs:-1.1, -1.0) -- (axis cs:-1.0, -1.0);
  \draw [densely dotted] (axis cs:-1.1,  1.0) -- (axis cs:-1.0,  1.0);
  \draw [<->, black] (axis cs:-0.5, 0.175) -- node [inner sep=1pt, fill=white] {\scriptsize$1.0$} (axis cs:0.5, 0.175);
  \draw [densely dotted] (axis cs:-0.5, 0.175) -- (axis cs:-0.5, 0);
  \draw [densely dotted] (axis cs: 0.5, 0.175) -- (axis cs: 0.5, 0);
  \draw [densely dotted] (axis cs:-0.5, 0.0) -- (axis cs:-0.5, -0.25);
  \draw [<->, black] (axis cs:-1.0, -0.25) -- node [inner sep=1pt, fill=white] {\scriptsize$0.5$} (axis cs:-0.5, -0.25);
  \draw [<->, black] (axis cs:0.5, -1.0) -- node [sloped, midway, inner sep=1pt, fill=white] {\scriptsize$1.0$} (axis cs:0.5, 0.0);
  \node [left = 0.25em, fill=white, inner sep=0.9pt] at (axis cs:-0.5, 0) {\scriptsize\color{muired}start\vphantom{g}};
  \node [right = 0.25em, fill=white, inner sep=1pt] at (axis cs:0.5, 0) {\scriptsize\color{muigreen}goal};
  \node [circle, inner sep=1pt, fill=muired, draw=muired] at (axis cs:-0.5, 0) {};
  \node [circle, inner sep=1pt, fill=muigreen, draw=muigreen] at (axis cs:0.5, 0) {};
}]

\addplot[area legend, draw=none, fill=black]
table[row sep=crcr] {%
x	y\\
-0.0666667	-0.0666667\\
0.0666667	-0.0666667\\
0.0666667	0.0666667\\
-0.0666667	0.0666667\\
}--cycle;

\addplot[area legend, draw=none, fill=black]
table[row sep=crcr] {%
x	y\\
0.907509	0.169584\\
1.23599	0.169584\\
1.23599	0.533561\\
0.907509	0.533561\\
}--cycle;

\addplot[area legend, draw=none, fill=black]
table[row sep=crcr] {%
x	y\\
0.354489	0.554785\\
0.566012	0.554785\\
0.566012	0.898923\\
0.354489	0.898923\\
}--cycle;

\addplot[area legend, draw=none, fill=black]
table[row sep=crcr] {%
x	y\\
0.698115	-0.269408\\
0.973625	-0.269408\\
0.973625	0.110868\\
0.698115	0.110868\\
}--cycle;

\addplot[area legend, draw=none, fill=black]
table[row sep=crcr] {%
x	y\\
0.551255	-0.967855\\
0.906014	-0.967855\\
0.906014	-0.762826\\
0.551255	-0.762826\\
}--cycle;

\addplot[area legend, draw=none, fill=black]
table[row sep=crcr] {%
x	y\\
-0.910007	-0.133001\\
-0.660081	-0.133001\\
-0.660081	0.250652\\
-0.910007	0.250652\\
}--cycle;

\addplot[area legend, draw=none, fill=black]
table[row sep=crcr] {%
x	y\\
0.999466	0.0334399\\
1.18864	0.0334399\\
1.18864	0.421798\\
0.999466	0.421798\\
}--cycle;

\addplot[area legend, draw=none, fill=black]
table[row sep=crcr] {%
x	y\\
-0.57003	0.749099\\
-0.303572	0.749099\\
-0.303572	1.12207\\
-0.57003	1.12207\\
}--cycle;

\addplot[area legend, draw=none, fill=black]
table[row sep=crcr] {%
x	y\\
0.110128	0.194098\\
0.4473	0.194098\\
0.4473	0.571002\\
0.110128	0.571002\\
}--cycle;

\addplot[area legend, draw=none, fill=black]
table[row sep=crcr] {%
x	y\\
0.890696	-0.216661\\
1.10821	-0.216661\\
1.10821	0.0759259\\
0.890696	0.0759259\\
}--cycle;

\addplot[area legend, draw=none, fill=black]
table[row sep=crcr] {%
x	y\\
0.510163	-0.208722\\
0.843747	-0.208722\\
0.843747	0.132913\\
0.510163	0.132913\\
}--cycle;

\addplot[area legend, draw=none, fill=black]
table[row sep=crcr] {%
x	y\\
-0.175145	-0.674524\\
0.0470772	-0.674524\\
0.0470772	-0.394354\\
-0.175145	-0.394354\\
}--cycle;

\addplot[area legend, draw=none, fill=black]
table[row sep=crcr] {%
x	y\\
0.00283012	-0.114439\\
0.283953	-0.114439\\
0.283953	0.154221\\
0.00283012	0.154221\\
}--cycle;

\addplot[area legend, draw=none, fill=black]
table[row sep=crcr] {%
x	y\\
-0.542896	-0.584992\\
-0.294606	-0.584992\\
-0.294606	-0.377052\\
-0.542896	-0.377052\\
}--cycle;

\addplot[area legend, draw=none, fill=black]
table[row sep=crcr] {%
x	y\\
-0.436511	0.987745\\
-0.275452	0.987745\\
-0.275452	1.17374\\
-0.436511	1.17374\\
}--cycle;

\addplot[area legend, draw=none, fill=black]
table[row sep=crcr] {%
x	y\\
-0.116224	0.360012\\
0.237705	0.360012\\
0.237705	0.819933\\
-0.116224	0.819933\\
}--cycle;

\addplot[area legend, draw=none, fill=black]
table[row sep=crcr] {%
x	y\\
0.436776	0.885657\\
0.743594	0.885657\\
0.743594	1.14053\\
0.436776	1.14053\\
}--cycle;

\addplot[area legend, draw=none, fill=black]
table[row sep=crcr] {%
x	y\\
0.0200017	-0.908888\\
0.436416	-0.908888\\
0.436416	-0.451075\\
0.0200017	-0.451075\\
}--cycle;

\addplot[area legend, draw=none, fill=black]
table[row sep=crcr] {%
x	y\\
-0.448421	-0.544424\\
-0.122771	-0.544424\\
-0.122771	-0.100025\\
-0.448421	-0.100025\\
}--cycle;

\addplot[area legend, draw=none, fill=black]
table[row sep=crcr] {%
x	y\\
-0.105147	0.385477\\
0.258258	0.385477\\
0.258258	0.775053\\
-0.105147	0.775053\\
}--cycle;

\addplot[area legend, draw=none, fill=black]
table[row sep=crcr] {%
x	y\\
-0.48419	0.706303\\
-0.111851	0.706303\\
-0.111851	1.07859\\
-0.48419	1.07859\\
}--cycle;

\end{axis}
\end{tikzpicture}%
%%% Local Variables:
%%% mode: latex
%%% TeX-master: "../root"
%%% End:}
  \caption{\footnotesize A 2D illustration of the simulated planning problems used in Section~\ref{sec:experiments}. The state space, \( X \subset \mathbb{R}^{n} \), is bounded by a hypercube of width two for both problems. Ten different instantiations of the random rectangles experiment were tested. The results are presented in Fig.~\protect\ref{fig:experimental_results}.}%
\label{fig:experiment_illustrations}
\end{figure}
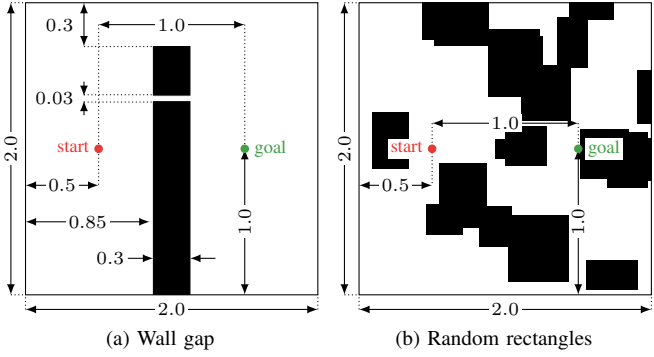

\section{Experimental Results}%
\label{sec:experiments}

ABIT* was compared against the Open Motion Planning Library (OMPL)~\citep{sucan2012} versions of RRT-Connect, RRT*, RRT$^{\#}$, LBT-RRT, and BIT* on simulated problems in \( \mathbb{R}^{4} \) and \( \mathbb{R}^{8} \) (Fig.~\ref{fig:experiment_illustrations})\footnote{The performances were measured with OMPL v1.4.1 on a laptop with 16~GB of RAM and an Intel i7-4910MQ processor running Ubuntu 18.04.}. The objective for the almost-surely asymptotically optimal planners was to minimize path length. The RGG constant \( \eta \) was set to 1.1 for all planners. LBT-RRT used the default value of 0.4 as the approximation factor. RRT$^{\#}$ sampled the entire state space. RRT-based algorithms used a goal bias of 5\% and maximum edge lengths of 0.5 and 1.25 in \( \mathbb{R}^{4} \) and \( \mathbb{R}^{8} \), respectively. BIT* and ABIT* sampled 100 states per batch regardless of the state space dimension, had graph pruning turned off, and used Euclidean distance as a heuristic. ABIT* was configured to search each RGG twice. First with a highly inflated heuristic, \( \varepsilon_{\mathrm{infl}} = 10^{6} \), and then again with a lower factor, \( \varepsilon_{\mathrm{infl}} = 1 + \nicefrac{10}{q} \). A single truncation factor, \( \varepsilon_{\mathrm{trunc}} = 1 + \nicefrac{5}{q} \) was used for all searches. All parameters were tuned to optimize planner performance on test problems.

\subsection{Experimental Problems}%
\label{sec:experimental-problems}

The planners were tested on two problems in \( \mathbb{R}^{4} \) and \( \mathbb{R}^{8} \). The first consisted of a wall with a narrow gap such that valid paths can only be in one of two homotopy classes (Fig.~\ref{fig:wall_gap}). Each planner was run 100 times for one second with different random seeds. Figures~3a and 3d show the achieved success rates and median path lengths of all tested planners.

The second consisted of axis-aligned hyperrectangles of random widths placed randomly in the state space (e.g., Fig.~\ref{fig:random_world}). Ten different random problems were generated for each state space dimension and planners were run 100 times on each instantiation. The runtime was limited to one and 40 seconds for problems in \( \mathbb{R}^{4} \) and \( \mathbb{R}^{8} \), respectively. Figures~3b, 3c, 3e, and 3f show the achieved success rates and median path costs of all tested planners for the two problems that resulted in the best and worst performances of ABIT*, as defined by its initial solution time relative to RRT-Connect.

\subsection{Planning for Axel}%
\label{sec:planning-for-axel}

The benefits of ABIT*'s advanced graph-search techniques were also demonstrated on real-world robotic planning problems during a week-long NASA/JPL-Caltech field test in the Mojave Desert with the Axel Rover System (Fig.~\ref{fig:teaser}). Axel is a tethered robotic platform designed for near-vertical surfaces and other challenging or unstable terrain. The complexity of the terrain and its interaction with the tether make for challenging planning problems because state evaluations are computationally expensive. ABIT* typically found initial solutions to these problems in under two seconds. This allowed it to spend the remaining computational time to improve this solution by repairing its search and increasing the density of its approximation. This resulted in 95.12\% autonomy by distance, despite the challenging terrain.

\begin{figure*}[t]
  \centering
  \input{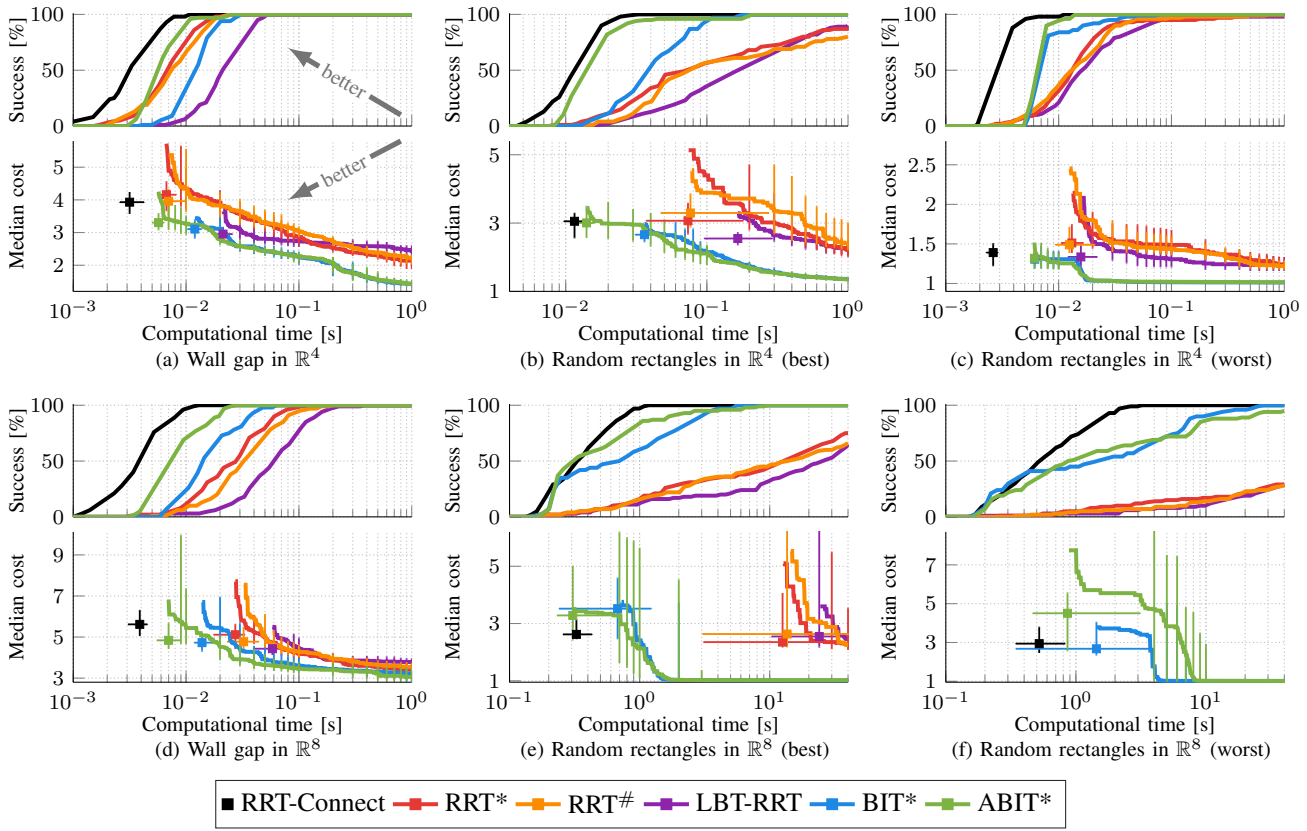}
  \caption{\footnotesize Results from the experiments described in Section~\protect\ref{sec:experimental-problems}. The results from the wall gap experiment are shown in the plots (a) and (d) for \( \mathbb{R}^{4} \) and \( \mathbb{R}^{8} \), respectively. The plots (b) and (c) show the best and worst instances of ten random rectangles experiments in \( \mathbb{R}^{4} \). The plots (e) and (f) show the best and worst instances in \( \mathbb{R}^{8} \). The squares in the cost plots show the median initial costs and times while the lines show the median cost over time for almost-surely asymptotically optimal planners (unsuccessful runs were taken as infinite costs). The error bars show a nonparametric 99\% confidence interval on the solution cost and time. Note that in plot (f) less than 50 trials of LBT-RRT, RRT* and RRT$^{\#}$ succeeded, so their median cost is infinity.\vspace{-2em}}%
\label{fig:experimental_results}
\end{figure*}

\section{Discussion \& Conclusion}%
\label{sec:discussion-and-conclusion}

ABIT* demonstrates that the perspective of separate approximation and search in single-query almost-surely asymptotically optimal sampling-based planning can be used to design algorithms with improved anytime performance. Figure~\ref{fig:experimental_results} shows that ABIT* outperforms other single-query, almost-surely asymptotically optimal planners by finding initial solutions quickly and converging to an optimal solution in an anytime manner without wasting computational effort. The only tested planner that finds initial solutions faster than ABIT* is RRT-Connect, which is not an almost-surely asymptotically optimal algorithm and cannot improve its initial solution when given more computational time.

ABIT* relies on admissible heuristic estimates of edge-costs between states and the cost-to-go from states to a goal. If no heuristics are available, ABIT* can be run with the trivial heuristic, i.e., \( \forall \bm{\mathrm{x}}_{i}, \bm{\mathrm{x}}_{j} \in X \), \( \widehat{h}(\bm{\mathrm{x}}_{i}) \equiv \widehat{c}(\bm{\mathrm{x}}_{i}, \bm{\mathrm{x}}_{j}) \equiv 0 \). An asymmetric bidirectional search could alternatively be used to simultaneously estimate and exploit a problem-specific heuristic, as in Adaptively Informed Trees (AIT*)~\citep{strub2020}.

If ABIT* is run with unit inflation and truncation factors, it can be viewed as a simplified but equally performant version of BIT* that cascades rewirings. ABIT* uses a single edge queue instead of BIT*'s dual vertex and edge queues and avoids repeated collision checks by caching checked edges in an object-oriented manner instead of labelling states \emph{old} or \emph{new} as in BIT*. This clarifies the conceptual ideas behind these algorithms and simplifies their implementation without adding any practically significant computational costs.

This improved implementation allows ABIT* to balance exploiting its current approximation of the state space with exploring the relevant regions of the state space. This is achieved using advanced graph-search techniques similar to anytime repairing and truncated search algorithms.

An inflated heuristic biases ABIT*'s search towards the goal and finds initial solutions quickly. Truncating the search once a sufficient bound on the solution quality of the current solution is achieved avoids wasting computational effort fully exploiting an approximation that will change. Flexible update policies of the inflation and truncation factors ensure that ABIT* can leverage high and low inflation and truncation depending on the accuracy of its approximation. ABIT* is not very sensitive to the exact form of these policies. Results comparable to the ones presented in this paper are achieved whenever the initial search is conducted with a very high inflation factor and both factors asymptotically tend to one as the number of sampled states approaches infinity.

ABIT* also shows the benefits of using advanced graph-search techniques in sampling-based planning on real-world path planning problems posed by Axel, a NASA/JPL-Caltech rover specialized for navigation on challenging terrain. 

Information on the OMPL implementation of ABIT* is available at \texttt{\small\href{https://robotic-esp.com/code/}{https://robotic-esp.com/code/}}.

\section{Acknowledgements}%
\label{sec:acknowledgements}

This research was funded by UK Research and Innovation and EPSRC through the Robotics and Artificial Intelligence for Nuclear (RAIN) research hub [EP/R026084/1]. We want to thank the entire NASA/JPL-Caltech Robotic Surface Mobility Group for their support on the Axel Rover System and extend special thanks to Travis, Jacob, Issa, and Mike for their help with integrating ABIT* onto Axel.

\bibliographystyle{IEEEtran}
\bibliography{abitstar_bibliography}

\end{document}